\newtcbox{\mymath}[1][]{%
    nobeforeafter, math upper, tcbox raise base,
    enhanced, colframe=blue!30!black,
    colback=gray!30, boxrule=1pt,
    #1}
\newcommand{\highest}[1]{\textcolor{Maroon}{\mathbf{#1}}}
\definecolor{rulecolor}{RGB}{0,71,171}
\definecolor{tableheadcolor}{RGB}{204,229,255}
\newcommand{\topline}{ %
        \arrayrulecolor{rulecolor}\specialrule{0.1em}{\abovetopsep}{0pt}%
        \arrayrulecolor{tableheadcolor}\specialrule{\belowrulesep}{0pt}{0pt}%
        \arrayrulecolor{rulecolor}}
\newcommand{\midtopline}{ %
        \arrayrulecolor{tableheadcolor}\specialrule{\aboverulesep}{0pt}{0pt}%
        \arrayrulecolor{rulecolor}\specialrule{\lightrulewidth}{0pt}{0pt}%
        \arrayrulecolor{white}\specialrule{\belowrulesep}{0pt}{0pt}%
        \arrayrulecolor{rulecolor}}
\newcommand{\bottomline}{ %
        \arrayrulecolor{white}\specialrule{\aboverulesep}{0pt}{0pt}%
        \arrayrulecolor{rulecolor} %
        \specialrule{\heavyrulewidth}{0pt}{\belowbottomsep}}%
\newcommand{\argmin}{\operatornamewithlimits{argmin}}
\newcommand\numberthis{\addtocounter{equation}{1}\tag{\theequation}}
\newtheorem{theorem}{Theorem}
\newtheorem{definition}{Definition}
\newtheorem{proposition}[theorem]{Proposition}
\newsavebox{\mybox}\newsavebox{\mysim}
\newcommand{\distras}[1]{%
  \savebox{\mybox}{\hbox{\kern3pt$\scriptstyle#1$\kern3pt}}%
  \savebox{\mysim}{\hbox{$\sim$}}%
  \mathbin{\overset{#1}{\kern\z@\resizebox{\wd\mybox}{\ht\mysim}{$\sim$}}}%
}
\title{ManifoldNet: A Deep Network Framework for Manifold-valued Data}
\author{
 Rudrasis~Chakraborty\textsuperscript{$\dagger$}, Jose~Bouza\textsuperscript{$\dagger$}, Jonathan~Manton \textsuperscript{$+$} and Baba~C.~Vemuri\textsuperscript{$\dagger$} \thanks{This research was funded in part by the NSF grant IIS-1525431 and IIS-1724174 to BCV.} \\
\textsuperscript{$\dagger$} Department of CISE, University of Florida, FL 32611, USA\\
{\tt\small \{rudrasischa, baba.vemuri\}@gmail.com, josebouza@ufl.edu, jonathan.manton@ieee.org}
}
\date{}
\begin{document}

\maketitle

\begin{abstract}
Deep neural networks have become the main work horse for many tasks
involving learning from data in a variety of applications in Science
and Engineering. Traditionally, the input to these networks lie in a
vector space and the operations employed within the network are well
defined on vector-spaces. In the recent past, due to technological
advances in sensing, it has become possible to acquire manifold-valued
data sets either directly or indirectly. Examples include but are not
limited to data from omnidirectional cameras on automobiles, drones
etc., synthetic aperture radar imaging, diffusion magnetic resonance
imaging, elastography and conductance imaging in the Medical Imaging
domain and others. Thus, there is need to generalize the deep neural
networks to cope with input data that reside on curved manifolds where
vector space operations are not naturally admissible. In this paper,
we present a novel theoretical framework to cope with manifold-valued
data inputs.  In doing this generalization, we draw parallels to the
widely popular convolutional neural networks (CNNs).  We call our
networks, ManifoldNet.

As in vector spaces where convolutions are equivalent to computing the
weighted mean of functions, an analogous definition for manifold-valued
data can be constructed involving the computation of the weighted
Fr\'{e}chet Mean (wFM). To this end, we present a provably convergent
recursive computation of the weighted Fr\'{e}chet Mean (wFM) of the
given data, where the weights makeup the convolution mask, to be
learned. Further, we prove that the proposed wFM layer achieves a
contraction mapping and hence the ManifoldNet does not need the
additional non-linear ReLU unit used in standard CNNs. Operations such
as pooling in traditional CNN are no longer necessary in this setting
since wFM is already a pooling type operation. \emph{Analogous to the
  equivariance of convolution operation in Euclidean space to
  translations, we prove that the wFM is equivariant to the action of
  the group of isometries admitted by the Riemannian manifold on which
  the data reside}. This equivariance property facilitates weight
sharing within the network.  We present experiments, using the
ManifoldNet framework, to achieve video classification and image
reconstruction using an auto-encoder+decoder setting. Experimental
results demonstrate the efficacy of ManifoldNets in the context of
classification and reconstruction accuracy.

\end{abstract}

\section{Introduction}\label{intro}

Convolutional neural networks (CNNs) have attracted enormous
attention in the past decade due to their significant success in
Computer Vision, Speech Analysis and other fields. CNNs were pioneered
by Lecun in \cite{LeCun1998} and gained much popularity ever since
their significant success on Imagenet data reported in Krizhevsky et
al. \cite{Krizhevsky2012}. Most of the deep networks and in particular
CNNs have traditionally been restricted to dealing with data residing
in vector spaces. However, in the past few years, there is growing
interest in generalizing the CNNs and deep networks in general to data
that reside in non-Euclidean spaces and possibly on smooth
manifolds. In this context, at the outset, it would be useful to
categorize problems into those that involve data as samples of
real-valued functions defined on a manifold and those that are simply
manifold-valued and hence are sample points on a manifold.

In the context of input data being samples of functions on smooth
manifolds, recently there has been a flurry of activity in developing
methods that can cope specifically with samples of functions on a
sphere i.e., spherical functions that are encountered in many
applications such as, omnidirectional cameras on drones, robots etc.,
meteorological data and many others.  The key property that allows
learned weight sharing in CNNs is the equivariance to
translations. The simplest technique to achieve equi-variance is via
data augmentation \cite{Krizhevsky2012,Dieleman2015}. Cascade of
wavelet transforms to achieve equivariance was shown in
\cite{Bruna2013,Oyallon2015}. In \cite{Gens2014}, authors describe
`Symnet', which achieves invariance to symmetry group actions.
Equivariance to discrete group actions was achieved through parameter
sharing in \cite{ravanbakhsh2017equivariance}. For the case of data on
a spherical domain, one considers exploiting equivariance to the
natural symmetry group operation on the sphere, which is the rotation
group. Several research groups recently reported spherical-CNNs
(SCNNs) to accommodate such an equivariance in defining the
convolution of functions
\cite{worrall2017harmonic,cohen2016group,cohen2018spherical}. In
another recent work \cite{esteves2017polar}, authors describe a polar
transformer network, which is equivariant to rotations and scaling
transformations. By combining this with what is known in literature as
a spatial transformer \cite{jaderberg2015spatial}, they achieve
equivariance to translations as well. More generally, equivariance of
convolution operations to symmetry group actions admitted by the
domain manifold is what is needed to this end and most recent work
reported in \cite{chakraborty2018h,kondor2018generalization} achieves
this for homogeneous Riemannian manifolds, which are more general than
Lie groups.

In this paper we will consider the second problem, namely, when the
input data are sample points on known Riemannian manifolds for
example, the manifold of symmetric positive definite matrices, $P_n$,
the special orthogonal group, $\textsf{SO}(n)$, the n-sphere,
$\mathbf{S}^n$, the Grassmannian, $\textsf{Gr}(p,n)$, and
others. There is very little prior work that we are aware of on deep
networks that can cope with input data samples residing on these
manifolds with the exception of
\cite{huang2016building,huang2017riemannian}. In
\cite{huang2016building}, authors presented a deep network
architecture for classification of hand-crafted features residing on a
Grassmann manifold that form the input to the network. Their network
architecture contains 3 blocks, the first block involves a full rank
mapping (to be learned) of the input orthonormal matrices followed by
re-orthonormalization operation achieved via the well known
$QR$-decomposition followed by the well known projection operation
involving a projection metric on the Grassmannian. The second block
consists of a pooling block which involves computing the arithmetic
mean that is valid since the projection matrices lie in a Euclidean
space. The third block is a standard fully connected layer with
softmax operations whose output is amenable to classification.  In
\cite{huang2017riemannian}, authors present a deep network
architecture for data residing on $P_n$. They proposed three types of
layers in the architecture. The first involves a bilinear mapping of
the input SPD matrices on $P_n$ to the SPD matrices on $P_{n+1}$. The
second type of layer is supposed to mimic the ReLU type operation
which is achieved by using the `max' between the scaled identity and
the true diagonal eigen-value matrix in order to reconstruct a new SPD
matrix from the input SPD matrix to this nonlinear ReLU type
operation. Finally, they use the Log-Euclidean framework described in
\cite{Arsigny2006} which maps points on $P_n$ into the tangent space
of $P_n$ by using the Riemannian $Log$ map \cite{DoCarmo1992} and then
uses the standard Euclidean computations since the tangent space is
isomorphic to the Euclidean space. This layer is then followed by a
fully connected layer and softmax operations as in the conventional
CNNs. Note that in both these works, the architecture does not
introduce any convolution or equivalent operations on the Grassmannian
or $P_n$.  Further, it does not use the natural invariant metric or
intrinsic operations on the Grassmannian or the $P_n$ in the
blocks. Using intrinsic operations within the layers guarantees that
the result remains on the manifold and hence does not require any
projection (extrinsic) operations to ensure the result lies in the
same space. Such extrinsic operations are known to yield results that
are susceptible to significant inaccuracies when the data variance is
large.  For example, the Fr\'{e}chet mean when computed using the
Log-Euclidean metric -- which is an extrinsic metric -- proves to be
inaccurate compared to the intrinsic mean when data variance is large
as was demonstrated in \cite{Salehian2015}. Further, since there is no
operation analogous to a convolution defined for data residing on
these manifolds in their network, it is not a generalization to the
CNN and as a consequence does not consider equivariance property to
the action of the group of isometries admitted by the manifold. 

There are several deep networks reported in literature to deal with
cases when data reside on 2-manifolds encountered in Computer Vision
and Graphics for modeling shapes of objects. Some of these are based
on graph-based representations of points on the surfaces in 3D and a
generalization of CNNs to graphs
\cite{henaff2015deep,defferrard2016convolutional}. There is also
recent work in \cite{masci2015geodesic} where the authors presented a
deep network called geodesic CNN (GCNN), where convolutions are
performed in local geodesic polar charts constructed on the
manifold. For more literature on deep networks for data on
2-manifolds, we refer the interested reader to a recent survey paper
\cite{bronstein2017geometric} and references therein.

In this paper, we present a novel deep learning framework called
ManifoldNets.  These are the analogues of CNNs and can cope with input
data residing on Riemannian manifolds. The intuition in defining the
analog relies on the equivariance property. Note that convolutions in
vector spaces are equivariant to translations and further, it is easy
to show that they are equivalent to computing the weighted mean
\cite{goh2011nonparametric}. 
Thus, for the case of data that are manifold-valued, we can define the
analogous operation of a weighted Fr\'{e}chet mean (wFM) and prove
that it is equivariant to the action of the natural group of
isometries admitted by the manifold. This is precisely what is
achieved in a subsequent section. Thus, our key contributions in this
work are: (i) we define the analog of convolution operations for
manifold-valued data to be one of estimating the wFM for which we
present a provably convergent, efficient and recursive estimator. (ii)
A proof of equivariance of wFM to the action of group of isometries
admitted by the Riemannian manifold on which the data reside. This
equivariance allows the network to share weights within the
layers. (iii) A novel deep architecture involving the Riemannian
counterparts to the conventional CNN units. (iv) Several real data
experiments on classification and reconstruction demonstrating the
performance of the ManifoldNet.

Rest of the paper is organized as follows. In section \ref{theory} we
present the main theoretical result showing the equivariance of wFM to
group actions admitted by the Riemannian manifold.  Section
\ref{theory} also contains a detailed description of the proposed
network architecture. In section \ref{results} we present the
experimental results and conclude in section \ref{conc}.

\section{Group action equivariant network for manifold-valued data}\label{theory}

In this section, we will define the equivalent of a convolution
operation on Riemannian manifolds. Before formally defining such
an operation and building the deep neural network for
the manifold-valued data, dubbed as ManifoldNet, we first present some
relevant concepts from differential geometry that will be used in the
rest of the paper.
\vspace*{0.1cm}

\textbf{Preliminaries.} Let $(\mathcal{M},g^\mathcal{M})$ be a
orientable complete Riemannian manifold with a Riemannian metric
$g^\mathcal{M}$, i.e., $(\forall x \in
\mathcal{M})\:g^\mathcal{M}_x:T_x{\mathcal{M}}\times T_x{\mathcal{M}}
\rightarrow \mathbf{R}$ is a bi-linear symmetric positive definite
map, where $T_x\mathcal{M}$ is the tangent space of $\mathcal{M}$ at
$x\in \mathcal{M}$. Let $d: \mathcal{M} \times \mathcal{M} \rightarrow
[0,\infty)$ be the metric (distance) induced by the Riemannian metric
  $g^\mathcal{M}$.  With a slight abuse of notation we will denote a
  Riemannian manifold $(\mathcal{M},g^\mathcal{M})$ by $\mathcal{M}$
  unless specified otherwise. Let $\Delta$ be the supremum of the
  sectional curvatures of $\mathcal{M}$.
\begin{definition}
\label{theory:def1}
Let $p \in \mathcal{M}$ and $r > 0$. Define $\mathcal{B}_r(p) =
\left\{ q \in \mathcal{M} | d(p,q) < r \right\}$ to be a open ball at
$p$ of radius $r$.
\end{definition}
\begin{definition} \cite{Groisser2004}
\label{theory:def2}
Let $p \in \mathcal{M}$. The {\it local injectivity radius} at $p$,
$r_{\text{inj}}(p)$, is defined as $r_{\text{inj}}(p) = \sup \left\{ r
| \text{Exp}_p : (\mathcal{B}_r(\mathbf{0}) \subset T_p \mathcal{M} )
\rightarrow \mathcal{M} \text{ is defined and is a
  diffeomorphism}\right.$ $\left. \text{onto its image} \right\}$. The
{\it injectivity radius} \cite{manton2004globally} of $\mathcal{M}$ is defined as
$r_{\text{inj}}(\mathcal{M}) = \inf_{p \in \mathcal{M}}
\left\{ r_{\text{inj}}(p)\right\}$.
\end{definition}
Within $\mathcal{B}_r(p)$, where $r \leq r_{\text{inj}}(\mathcal{M})$,
the mapping $ \text{Exp}^{-1}_p: \mathcal{B}_r(p) \rightarrow
\mathcal{U} \subset T_p \mathcal{M} $, is called the inverse
Exponential/ Log map.
\begin{definition} \cite{Kendall1990}
\label{theory:def3}
An open ball $\mathcal{B}_r(p)$ is a {\it regular geodesic ball} if $r
< r_{\text{inj}}(p)$ and $r < \pi/\left(2 \Delta^{1/2} \right)$.
\end{definition}
In Definition \ref{theory:def3} and below, we interpret
$1/\Delta^{1/2}$ as $\infty$ if $\Delta \leq 0$. It is well known
that, if $p$ and $q$ are two points in a regular geodesic ball
$\mathcal{B}_r(p)$, then they are joined by a unique geodesic within
$\mathcal{B}_r(p)$ \cite{Kendall1990}.
\begin{definition}\cite{Chavel2006}
\label{theory:def4}
$\mathcal{U} \subset \mathcal{M}$ is strongly convex if for all $p, q
\in \mathcal{U}$, there exists a unique length minimizing geodesic
segment between $p$ and $q$ and the geodesic segment lies entirely in
$\mathcal{U}$.
\end{definition}

\begin{definition} \cite{Groisser2004}
\label{theory:def5}
Let $p \in \mathcal{M}$. The {\it local convexity radius} at $p$,
$r_{\text{cvx}}(p)$, is defined as $r_{\text{cvx}}(p) = \sup\left\{ r
\leq r_{\text{inj}}(p) | \mathcal{B}_r(p) \text{ is strongly
  convex}\right\}$. The {\it convexity radius} of $\mathcal{M}$ is
defined as $r_{\text{cvx}}(\mathcal{M}) = \inf_{p \in \mathcal{M}}
\left\{ r_{\text{cvx}}(p)\right\}$.
\end{definition}

For the rest of the paper, we will assume that the samples on
$\mathcal{M}$ lie inside an open ball $U = \mathcal{B}_r(p)$ where
$r=\min\left\{r_{\text{cvx}}(\mathcal{M}),
r_{\text{inj}}(\mathcal{M})\right\}$, for some $p \in \mathcal{M}$,
unless mentioned otherwise. Now, we are ready to define the operations
necessary to develop the ManifoldNet.

\subsection{$\textsf{wFM}$ on $\mathcal{M}$ as a generalization of convolution}
\label{sec21}
Let $\left\{X_i\right\}_{i=1}^N$ be the manifold-valued samples on
$\mathcal{M}$.  We define the convolution type operation on
$\mathcal{M}$ as the weighted Fr\'{e}chet mean (wFM)
\cite{Frechet1948a} of the samples $\left\{X_i\right\}_{i=1}^N$. Also,
by the aforementioned condition on the samples, the existence and
uniqueness of FM is guaranteed \cite{Afsari2011}.  As mentioned
earlier, it is easy to show (see \cite{goh2011nonparametric}).  that
convolution $\psi^{*}=b \star a$ of two functions $a:X \subset \mathbf{R}^n
\rightarrow \mathbf{R}$ and $b:X \subset \mathbf{R}^n \rightarrow \mathbf{R}$ can be formulated as
computation of the weighted mean $\psi^{*} = argmin_\psi \int
a(\textbf{u})(\psi-\widetilde{b}_{\textbf{u}})^2d\textbf{u}$, where, $\forall \textbf{x} \in X, \widetilde{b}_{\textbf{u}}(\textbf{x}) = b\left(\textbf{u}+\textbf{x} \right)$ and $\int a(\textbf{x})d\textbf{x} =1$. Here, $f^2$ for any function $f$ is defined pointwise. Further, the defining
property of convolutions in vector spaces is the linear translation
equivariance. Since weighted mean in vector spaces can be generalized
to wFM on manifolds and further, wFM can be shown (see below) to be
equivariant to symmetry group actions admitted by the manifold, we
claim that wFM is a generalization of convolution operations to
manifold-valued data.  

Let $\left\{w_i\right\}_{i=1}^N$ be the weights such that they satisfy
the convexity constraint, i.e., $\forall i, w_i > 0$ and $\sum_i w_i =
1$, then wFM, $\textsf{wFM}\left(\left\{X_i\right\},
\left\{w_i\right\}\right)$ is defined as:
\begin{align}
\label{theory:eq1}
\textsf{wFM}\left(\left\{X_i\right\}, \left\{w_i\right\}\right) = \argmin_{M \in \mathcal{M}} \sum_{i=1}^N w_i d^2\left(X_i, M\right)
\end{align}

Analogous to the equivariance property of convolution translations in
vector spaces, we will now proceed to show that the $\textsf{wFM}$ is
equivariant under the action of the group of isometries of
$\mathcal{M}$. We will first formally define the group of isometries
of $\mathcal{M}$ (let us denote it by $G$) and then define the
equivariance property and show that $\textsf{wFM}$ is $G$-equivariant.

\begin{definition}[{\bf Group of isometries of $\mathcal{M}$}]
\label{theory:def6}
{A diffeomorphism $\phi: \mathcal{M} \rightarrow \mathcal{M}$ is an
  isometry if it preserves distance, i.e., $d\left(\phi\left(x\right),
  \phi\left(y\right)\right) = d\left(x, y\right)$. The set
  $I(\mathcal{M})$ of all isometries of $\mathcal{M}$ forms a group
  with respect to function composition. Rather than write an isometry
  as a function $\phi$, we will write it as a group action.
  Henceforth, let $G$ denote the group $I(\mathcal{M})$, and for $g
  \in G$, and $x \in \mathcal{M}$, let $g.x$ denote the result of
  applying the isometry $g$ to point $x$.}
\end{definition}

Clearly $\mathcal{M}$ is a $G$ set (see \cite{dummit2004abstract} for
definition of a $G$ set). We will now define equivariance and show
that $\textsf{wFM}$, is $G$-equivariant.

\begin{definition} [{\bf Equivariance}]
\label{theory:def7}
Let $X$ and $Y$ be $G$ sets. Then, $F: X \rightarrow Y$ is said to be
{\it $G$-equivariant} if
\begin{align}
F(g.x) = g.F(x)
\end{align}
for all $g \in G$ and all $x \in X$.
\end{definition}

Let $U \subset \mathcal{M}$ be an open ball inside which FM exists and
is unique, let $P$ be the set consists of all possible finite
subsets of $U$.

\begin{theorem}
\label{theory:thm1}
Given $\left\{w_i\right\}$ satisfying the convex constraint, let $F: P
\rightarrow U$ be a function defined by $\left\{X_i\right\} \mapsto
\textsf{wFM}\left(\left\{X_i\right\},
\left\{w_i\right\}\right)$. Then, $F$ is $G$-equivariant.
\end{theorem}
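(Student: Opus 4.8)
The plan is to establish the single identity $\textsf{wFM}(\{g.X_i\}, \{w_i\}) = g.\textsf{wFM}(\{X_i\}, \{w_i\})$, which is precisely the equivariance relation $F(g.\{X_i\}) = g.F(\{X_i\})$ of Definition \ref{theory:def7} once one observes that the $G$-action on a finite subset is pointwise, $g.\{X_i\} = \{g.X_i\}$. Writing $M^\ast = \textsf{wFM}(\{X_i\}, \{w_i\})$ for the unique minimizer of the original objective $f(M) = \sum_{i=1}^N w_i d^2(X_i, M)$, the claim reduces to showing that $g.M^\ast$ is the unique minimizer of the transformed objective $h(M) = \sum_{i=1}^N w_i d^2(g.X_i, M)$.

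The core of the argument is a change of variables driven by the defining property of an isometry, $d(g.x, g.y) = d(x, y)$ from Definition \ref{theory:def6}. Since $g$ is a diffeomorphism of $\mathcal{M}$, every candidate point can be written as $M = g.M'$ with $M'$ ranging over all of $\mathcal{M}$, and the isometry property applied termwise yields
\begin{align}
h(g.M') = \sum_{i=1}^N w_i d^2(g.X_i, g.M') = \sum_{i=1}^N w_i d^2(X_i, M') = f(M').
\end{align}
Because $g$ is a bijection, minimizing $h$ over $M$ is the same as minimizing $f$ over $M'$; the latter is minimized exactly at $M' = M^\ast$, so $h$ attains its minimum at $M = g.M^\ast$. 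This gives $\textsf{wFM}(\{g.X_i\}, \{w_i\}) = g.M^\ast$, as required.

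The step I would treat most carefully is well-definedness, since the whole argument presupposes that both the original and the transformed problems possess a unique minimizer. The original samples lie in the regular geodesic ball $U = \mathcal{B}_r(p)$, so existence and uniqueness hold by the standing assumption and \cite{Afsari2011}. For the transformed samples I would note that $g$ carries $\mathcal{B}_r(p)$ onto $\mathcal{B}_r(g.p)$, a ball of the same radius that is again a regular geodesic ball: an isometry preserves the distance $d$, the injectivity radius, and the sectional curvatures, hence the two conditions of Definition \ref{theory:def3}. Consequently the transformed wFM exists and is unique as well, and the equality of the two sides is literal rather than merely up to a choice of minimizer. Apart from this geometric bookkeeping, the equivariance is an immediate consequence of the one-line change of variables, which is exactly why isometry invariance of $d$ is the only property of $G$ that the proof actually uses.
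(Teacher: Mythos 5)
Your proposal is correct and follows essentially the same route as the paper's proof: both arguments rest on the single change-of-variables identity $\sum_i w_i d^2(g.X_i, g.M') = \sum_i w_i d^2(X_i, M')$ furnished by the isometry property, combined with uniqueness of the minimizer, and both flag the same well-definedness point that $g$ maps $\mathcal{B}_r(p)$ onto $\mathcal{B}_r(g.p)$ so the transformed wFM exists and is unique. If anything, your justification of that last claim (isometries preserve injectivity radius and curvature, so the image ball is again a regular geodesic ball) is spelled out more fully than in the paper, which states it without proof.
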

\begin{proof}
{ Let $g \in G$ and $\left\{X_i\right\}_{i=1}^N \in P$, now, let $M^* =
\textsf{wFM}\left(\left\{X_i\right\}, \left\{w_i\right\}\right)$, as
$g.F\left(\left\{X_i\right\}\right) = g.M^*$, it suffices to show
$g.M^*$ is $\textsf{wFM}\left(\left\{g.X_i\right\},
\left\{w_i\right\}\right)$ (assuming the existense and uniqueness of $\textsf{wFM}\left(\left\{g.X_i\right\},
\left\{w_i\right\}\right)$ which is stated in the following claim). 
\ \\
{\bf Claim:} Let $U=\mathcal{B}_r\left(p\right)$ for some $r >0$ and $p \in \mathcal{M}$. Then, $\left\{g.X_i\right\} \subset \mathcal{B}_r\left(g.p\right)$ and hence $\textsf{wFM}\left(\left\{g.X_i\right\},
\left\{w_i\right\}\right)$ exists and is unique.}

Let $\widetilde{M}$ be
$\textsf{wFM}\left(\left\{g.X_i\right\}, \left\{w_i\right\}\right)$. Then, 
\begin{align*}
\sum_{i=1}^N w_i d^2\left(g.X_i, \widetilde{M}\right) &= \sum_{i=1}^N w_i d^2\left(X_i, g^{-1}.\widetilde{M}\right)
\end{align*}
Since, $M^* = \textsf{wFM}\left(\left\{X_i\right\},
\left\{w_i\right\}\right)$, hence, $M^* = g^{-1}.\widetilde{M}$, i.e.,
$\widetilde{M} = g.M^*$. Thus, $g.M^* =
\textsf{wFM}\left(\left\{g.X_i\right\}, \left\{w_i\right\}\right)$,
which implies $F$ is $G$-equivariant.
\end{proof}

Now we give some examples of $\mathcal{M}$ with the corresponding
group of isometries $G$. Let $\mathcal{M} = \textsf{SPD}(n)$ (the
space of $n\times n$ symmetric positive-definite matrices). Let $d$ be
the Stein metric on $\textsf{SPD}(n)$. Then, the group of isometries
$G$ is $\mathsf{O}(n)$ (the space of $n\times n$ orthogonal
matrices). A class of Riemannian manifolds on which $G$ acts
transitively are called Riemannian homogeneous spaces. We can see that
on a Riemannian homogeneous space $\mathcal{M}$, $\textsf{wFM}$ is
$G$-equivariant. Equipped with a $G$-equivariant operator on
$\mathcal{M}$, we can claim that the $\textsf{wFM}$ (defined above) is
a valid convolution operator since group equivariance is a unique
defining property of a convolution operator. The rest of this
subsection will be devoted to developing an efficient way to compute
$\textsf{wFM}$.  Let $\omega^{\mathcal{M}}> 0$ be the Riemannian
volume form. Let $p_{\mathbf{X}}$ be the probability density of a
$U$-valued random variable $\mathbf{X}$ with respect to
$\omega^{\mathcal{M}}$ on $U \subset \mathcal{M}$, so that
$\textsf{Pr}\left( X \in \mathfrak{A} \right) = \int_{\mathfrak{A}}
p_X(Y) \omega^{\mathcal{M}}\left(Y\right)$ for any Borel-measurable
subset $\mathfrak{A}$ of U.  Let $Y \in U$, we can define the
expectation of the real valued random variable $d^2(,Y): U \rightarrow
\mathbf{R}$ by $E\left[d^2(,Y)\right] = \int_{U} d^2(X,Y)
p_{\mathbf{X}}(X) \omega^{\mathcal{M}}(X)$. Now, let $w: U \rightarrow
(0, \infty)$ be an integrable function and $\int_{U} w\left(X\right)
\omega^{\mathcal{M}}\left(X\right)=1$.
  
Then, observe that,
\begin{align*}
\label{theory:eq2}
E_{w}\left[d^2(,Y)\right] &:= \int_{U} w(X) d^2(X,Y) p_X(X)
\omega^{\mathcal{M}}(X) \\ &= C\; \int_{U} d^2(X,Y) \widetilde{p}_X(X)
\omega^{\mathcal{M}}(X) \\ &= C\;
\widetilde{E}\left[d^2(,Y)\right]. \numberthis
\end{align*}

Here, $\widetilde{p}_X$ is the probability density corresponding to
the probability measure $\widetilde{\textsf{Pr}}$ defined by,
$\widetilde{\textsf{Pr}}\left(X \in \mathfrak{X}\right) =
\int_{\mathfrak{X}} \widetilde{p}_X(Y) \omega^{\mathcal{M}}(Y)$ $:=
\int_{\mathfrak{X}} \frac{1}{C}\; p_X(Y) w(Y)\omega^{\mathcal{M}}(Y)$,
where, $\mathfrak{X}$ lies in the Borel $\sigma$-algebra over $U$ and
$C = \int_{U} p_X(Y) w(Y)\omega^{\mathcal{M}}(Y)$. Note that the
constant $C>0$, since $p_X$ is a probability density, $w>0$ and
$\mathcal{M}$ is orientable. Thus, $E_{w}\left[d^2(,Y)\right]$ with
respect to $p_X$ is proportional to
$\widetilde{E}\left[d^2(,Y)\right]$ with respect to $\widetilde{p}_X$.

Now, we will prove that the support of $p_X$ is same as
$\widetilde{p}_X$.

\begin{proposition}
\label{theory:prop0}
$\textsf{supp}\left(p_X\right) = \textsf{supp}\left(\widetilde{p}_X\right)$.
\end{proposition}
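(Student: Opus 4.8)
The plan is to exploit the fact that $\widetilde{p}_X$ and $p_X$ differ only by multiplication by the strictly positive function $w$ and the positive constant $1/C$, so the two densities vanish on exactly the same set (up to $\omega^{\mathcal{M}}$-null sets), which forces their supports to coincide. First I would fix the definition of support to be used: $\textsf{supp}(p_X)$ is the smallest closed subset $S \subseteq U$ with $\textsf{Pr}(U \setminus S) = 0$, equivalently the set of points every open neighborhood of which has positive probability, and likewise $\textsf{supp}(\widetilde{p}_X)$ relative to $\widetilde{\textsf{Pr}}$. Phrased this way, it suffices to show that $\textsf{Pr}$ and $\widetilde{\textsf{Pr}}$ have the same null sets.

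The core step establishes that equivalence. Fix any Borel-measurable $\mathfrak{X} \subseteq U$. Since $\widetilde{\textsf{Pr}}(\mathfrak{X}) = \frac{1}{C}\int_{\mathfrak{X}} p_X(Y)\, w(Y)\, \omega^{\mathcal{M}}(Y)$ with $C > 0$ and integrand $p_X w \geq 0$, we have $\widetilde{\textsf{Pr}}(\mathfrak{X}) = 0$ if and only if $p_X w = 0$ holds $\omega^{\mathcal{M}}$-almost everywhere on $\mathfrak{X}$. Because $w(Y) > 0$ for every $Y \in U$, the product $p_X w$ vanishes exactly where $p_X$ does, so $p_X w = 0$ a.e. on $\mathfrak{X}$ is equivalent to $p_X = 0$ a.e. on $\mathfrak{X}$, i.e. to $\textsf{Pr}(\mathfrak{X}) = 0$. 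Hence $\textsf{Pr}(\mathfrak{X}) = 0 \iff \widetilde{\textsf{Pr}}(\mathfrak{X}) = 0$, and the two measures share the same null sets.

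With that in hand the conclusion is immediate: a point $Y$ lies outside $\textsf{supp}(p_X)$ iff it has an open neighborhood of zero $\textsf{Pr}$-measure, which by the previous step holds iff that neighborhood has zero $\widetilde{\textsf{Pr}}$-measure, i.e. iff $Y \notin \textsf{supp}(\widetilde{p}_X)$; taking complements yields $\textsf{supp}(p_X) = \textsf{supp}(\widetilde{p}_X)$. I do not anticipate a genuine obstacle here, since the whole argument rests only on the strict positivity of $w$ and the positivity of $C$, both already recorded in the paragraph preceding the proposition. The one point that warrants care is to keep every assertion at the level of almost-everywhere statements, so that the equivalence of null sets is rigorous rather than a pointwise argument with the (merely measurable) densities.
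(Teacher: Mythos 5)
Your proof is correct, and it rests on exactly the ingredient the paper uses---$\widetilde{p}_X = \frac{1}{C}\, p_X\, w$ with $w > 0$ and $C > 0$---but it is executed at a different level. The paper argues pointwise: it implicitly takes $\textsf{supp}\left(p_X\right)$ to be the set where a fixed representative of the density is strictly positive, and checks the two inclusions by noting that $p_X(X) > 0$ if and only if $\widetilde{p}_X(X) > 0$ at each individual point (the converse direction is phrased, somewhat informally, in terms of ``a sample drawn from $\widetilde{p}_X$'' and a contradiction). You instead adopt the measure-theoretic/topological definition of support (smallest closed set of full measure, equivalently the points all of whose neighborhoods have positive mass), prove that $\textsf{Pr}$ and $\widetilde{\textsf{Pr}}$ have identical null sets, and transfer the neighborhood characterization of support across that equivalence. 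What your route buys is precisely the issue you flag at the end: densities are only defined up to $\omega^{\mathcal{M}}$-a.e.\ equivalence, so the positivity set of a chosen representative is not intrinsically well defined, whereas null sets of the induced measures are; your formulation is representative-independent and strictly more rigorous. What the paper's route buys is brevity: once one fixes the representative $\widetilde{p}_X = \frac{1}{C}\, p_X\, w$, the pointwise equivalence of positivity is immediate, and nothing stronger is needed for the use made of this proposition later in the section. Both arguments hinge on the same two facts recorded before the statement, namely $w > 0$ on $U$ and $C > 0$, so there is no substantive divergence in the mathematics, only in the standard of formalization.
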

\begin{proof}
Let $X \in \textsf{supp}\left(p_X\right)$, then, $p_X\left(X\right) >
0$. Since, $w(X) > 0$, hence, $\widetilde{p}_X\left(X\right)> 0$ and
thus, $X \in \textsf{supp}\left(\widetilde{p}_X\right)$. On the other
hand, assume $\widetilde{X}$ to be a sample drawn from
$\widetilde{p}_X$. Then, either $p_X\left(\widetilde{X}\right) = 0$ or
$p_X\left(\widetilde{X}\right) > 0$. If,
$p_X\left(\widetilde{X}\right) = 0$, then,
$\widetilde{p}_X\left(\widetilde{X}\right) = 0$ which contradicts our
assumption. Hence, $p_X\left(\widetilde{X}\right) > 0$, i.e.,
$\widetilde{X} \in \textsf{supp}\left(p_X\right)$. This concludes the
proof.
\end{proof}

\begin{proposition}
\label{theory:prop1}
$\textsf{wFE}\left(\textsf{X}, w\right) =
\textsf{FE}\left(\widetilde{\textsf{X}}\right)$
\end{proposition}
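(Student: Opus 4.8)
The plan is to reduce the identity to the proportionality already established in Eq.~\eqref{theory:eq2} and then invoke the scale-invariance of the $\argmin$ operator. First I would unwind the definitions, writing $\textsf{wFE}\left(\textsf{X}, w\right) = \argmin_{M \in U} E_w\left[d^2(,M)\right]$ and $\textsf{FE}\left(\widetilde{\textsf{X}}\right) = \argmin_{M \in U} \widetilde{E}\left[d^2(,M)\right]$, so that both quantities are exhibited as the minimizers of the respective (weighted and reweighted) expected squared-distance functionals over the ball $U$. This makes explicit that the two sides of the claimed equality are defined by optimizing two functionals of the same variable $M$, and so the whole statement is a comparison of the locations of their minima.

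The core step is the observation that Eq.~\eqref{theory:eq2}, which was derived for an arbitrary fixed $Y \in U$, applies in particular with $Y = M$, giving $E_w\left[d^2(,M)\right] = C\,\widetilde{E}\left[d^2(,M)\right]$ for every $M \in U$. The constant $C = \int_{U} p_X(Y) w(Y)\,\omega^{\mathcal{M}}(Y)$ is strictly positive (as already noted, since $p_X$ is a density, $w>0$, and $\mathcal{M}$ is orientable) and, crucially, does not depend on the optimization variable $M$. Because multiplying an objective by a fixed positive constant leaves the set of minimizers unchanged, the functionals $M \mapsto E_w\left[d^2(,M)\right]$ and $M \mapsto \widetilde{E}\left[d^2(,M)\right]$ attain their minima at exactly the same point, which is precisely the asserted equality $\textsf{wFE}\left(\textsf{X}, w\right) = \textsf{FE}\left(\widetilde{\textsf{X}}\right)$.

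The only point that needs genuine care is the well-definedness of the two $\argmin$'s as single points rather than as sets, so that the equality of minimizers is meaningful. Here I would appeal to the standing assumption that the support of the underlying density lies inside the regular geodesic ball $U = \mathcal{B}_r(p)$ with $r = \min\left\{r_{\text{cvx}}(\mathcal{M}), r_{\text{inj}}(\mathcal{M})\right\}$, under which existence and uniqueness of the Fr\'echet expectation are guaranteed by \cite{Afsari2011}. Proposition~\ref{theory:prop0} is exactly what lets me transfer this conclusion from $p_X$ to $\widetilde{p}_X$: since $\textsf{supp}\left(p_X\right) = \textsf{supp}\left(\widetilde{p}_X\right) \subset U$, the reweighted expectation $\textsf{FE}\left(\widetilde{\textsf{X}}\right)$ is itself well-defined and unique inside $U$. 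With both minimizers existing, being unique, and the two objectives differing only by the positive factor $C$, the equality follows immediately. I expect no real obstacle beyond this bookkeeping of the support and convexity hypotheses, since all the analytic content is carried by Eq.~\eqref{theory:eq2}.
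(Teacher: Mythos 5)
Your proposal is correct and takes essentially the same route as the paper: both reduce the claim to Eq.~\eqref{theory:eq2}, noting that the constant $C>0$ does not depend on the optimization variable, so the weighted and reweighted objectives have the same minimizer. The extra bookkeeping you supply on existence and uniqueness (via the support condition and Proposition~\ref{theory:prop0}) is left implicit in the paper but does not change the argument.
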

\begin{proof}
Let $\textsf{X}$ and $\widetilde{\textsf{X}}$ be the $\mathcal{M}$
valued random variable following $p_X$ and $\widetilde{p}_X$
respectively. We define the weighted Fr\'{e}chet expectation (wFE) of
$\textsf{X}$ as:
\begin{equation*}
\textsf{wFE}\left(\textsf{X}, w\right) = \argmin_{Y \in \mathcal{M}}
\int_{\mathcal{M}} w(X) d^2(X,Y) p_X(X) \omega^{\mathcal{M}}(X)
\end{equation*}
Using \eqref{theory:eq2}, we get
$\textsf{FE}\left(\widetilde{\textsf{X}}\right) =
\textsf{wFE}\left(\textsf{X}, w\right)$, as $C$ is independent of the
choice of $Y$, which concludes the proof.
\end{proof}

Let $\left\{X_i\right\}_{i=1}^N$ be samples drawn from $p_X$ and
$\left\{\widetilde{X}_i\right\}_{i=1}^N$ be samples drawn from
$\widetilde{p}_X$. In order to compute $\textsf{wFM}$, we will now
present an online algorithm (inductive FM Estimator -- dubbed
\textsf{iFME}). Given, $\left\{X_i\right\}_{i=1}^N \subset U$ and
$\left\{w_i:=w\left(X_i\right)\right\}_{i=1}^N$ such that $\forall i,
w_i > 0$, the $n^{th}$ estimate, $M_{n}$ of
$\textsf{wFM}\left(\left\{X_i\right\},\left\{w_i\right\}\right)$ is
given by the following recursion:
\begin{align}
\label{theory:eq3}
M_1 = X_1 \:\:\qquad \qquad \qquad M_{n} =
\Gamma_{M_{n-1}}^{X_n}\left(\frac{w_n}{\sum_{j=1}^n w_j}\right).
\end{align}
In the above equation, $\Gamma_X^Y: \left[0,1\right] \rightarrow U$ is
the shortest geodesic curve from $X$ to $Y$. Observe that, in general
$\textsf{wFM}$ is defined with $\sum_{i=1}^N w_i=1$, but in above
definition, $\sum_{i=1}^N w_i \neq 1$. We can normalize
$\left\{w_i\right\}$ to get $\left\{\widetilde{w}_i\right\}$ by
$\widetilde{w}_i = w_i/\left(\sum_i w_i\right)$, but then
Eq. \ref{theory:eq3} will not change as
$\widetilde{w}_n/\left(\sum_{j=1}^n \widetilde{w}_j\right) =
w_n/\left(\sum_{j=1}^n w_j\right)$. This gives us an efficient
inductive/recursive way to define convolution operation on
$\mathcal{M}$. Now, we state and prove that the proposed
$\textsf{wFM}$ estimator is consistent.  
\begin{proposition}
Using the above notations and assumptions, let
$\left\{X_i\right\}_{i=1}^N$ be i.i.d. samples drawn from $p_X$ on
$\mathcal{M}$. Let the $\textsf{wFE}$ be finite. Then, $M_N$ converges
a.s. to $\textsf{wFE}$ as $N \rightarrow \infty$.
\end{proposition}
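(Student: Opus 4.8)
The plan is to reduce the weighted problem to an unweighted Fr\'echet-mean problem and then recognize the recursion \eqref{theory:eq3} as a Riemannian stochastic gradient descent on the Fr\'echet variance functional. By Proposition \ref{theory:prop1} we have $\textsf{wFE}\left(\textsf{X},w\right)=\textsf{FE}\left(\widetilde{\textsf{X}}\right)$, so it suffices to prove that $M_N$ converges a.s. to $\mu:=\textsf{FE}\left(\widetilde{\textsf{X}}\right)$, the minimizer over $U$ of $f(Y):=\widetilde{E}\!\left[d^2(\cdot,Y)\right]=\int_U d^2(X,Y)\,\widetilde{p}_X(X)\,\omega^{\mathcal{M}}(X)$. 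Existence, finiteness and uniqueness of this minimizer follow from the standing assumption that the data lie in the regular, strongly convex ball $U=\mathcal{B}_r(p)$ (Definitions \ref{theory:def3}--\ref{theory:def5}) together with the hypothesis that $\textsf{wFE}$ is finite.

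First I would check that the recursion is well defined and never leaves $U$. Since $M_1=X_1\in U$ and at each step $M_n=\Gamma_{M_{n-1}}^{X_n}\!\left(w_n/S_n\right)$ lies on the unique minimizing geodesic joining $M_{n-1}\in U$ to $X_n\in U$ with parameter $w_n/S_n\in(0,1]$, where $S_n:=\sum_{j=1}^n w_j$, strong convexity of $U$ (Definition \ref{theory:def4}, valid because $r\le r_{\text{cvx}}(\mathcal{M})$) forces $M_n\in U$ for every $n$. This keeps every $\text{Exp}^{-1}$ map and geodesic appearing in \eqref{theory:eq3} uniquely defined throughout the recursion.

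Next, and this is the crux, I would interpret the update as stochastic gradient descent. In the tangent space at $M_{n-1}=m$ the recursion adds the vector $\eta_n\,\text{Exp}^{-1}_m(X_n)$ with random step size $\eta_n=w_n/S_n$; since the Riemannian gradient of $Y\mapsto d^2(X,Y)$ equals $-2\,\text{Exp}^{-1}_Y(X)$, the change of measure in \eqref{theory:eq2} gives, conditionally on $m$, the expected direction $\int_U w(X)\,\text{Exp}^{-1}_m(X)\,p_X(X)\,\omega^{\mathcal{M}}(X)=C\,\widetilde{E}\!\left[\text{Exp}^{-1}_m(X)\right]=-\tfrac{1}{2}C\,\mathrm{grad}\,f(m)$. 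Thus each step is a genuine Riemannian SGD step on $f$ whose only stationary point in $U$ is $\mu$. It remains to verify the Robbins--Monro step-size conditions: because $\overline{U}$ is compact (Hopf--Rinow) so that a continuous $w$ is bounded between positive constants, the strong law of large numbers yields $S_n/n\to C>0$ a.s., whence $\sum_n\eta_n=\infty$ and $\sum_n\eta_n^2<\infty$ almost surely. (In flat space these same steps telescope exactly into the batch average $\big(\sum_j w_jX_j\big)/\big(\sum_j w_j\big)$, which converges to the reweighted mean by the law of large numbers; the manifold argument replaces this exact telescoping by the SGD convergence below.)

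Finally I would invoke a convergence theorem for Riemannian stochastic gradient descent, in the spirit of the geometric law of large numbers for recursive Fr\'echet-mean estimators, to conclude $M_N\to\mu$ a.s. The main obstacle is controlling curvature: on positively curved $\mathcal{M}$ one must use the regular-geodesic-ball condition $r<\pi/\left(2\Delta^{1/2}\right)$ (Definition \ref{theory:def3}) to guarantee that $f$ is geodesically convex on $U$ with a unique minimizer and that the squared-distance maps obey the second-order comparison bounds needed for the iterates to contract toward $\mu$ rather than be deflected by the geometry. By contrast, the randomness of the data-dependent step sizes $\eta_n$ is a secondary difficulty, absorbed by the almost-sure estimate $S_n/n\to C$ obtained from the strong law of large numbers.
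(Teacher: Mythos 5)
Your proposal is correct, and it shares the paper's opening move: both reduce the weighted problem to an unweighted one via Proposition \ref{theory:prop1}. The second halves, however, differ genuinely. The paper splits into cases according to the sign of the sectional curvature and quotes two off-the-shelf results --- Sturm \cite{Sturm2003a} for non-positive curvature and Bonnabel's Riemannian stochastic gradient theorem \cite{Bonnabel2013} for non-negative curvature, the latter under an \emph{unverified} hypothesis that $d(M_n, X_{n+1}) \leq A$ for all $n$ --- and it explicitly concedes that its proof works only for manifolds whose curvature has a uniform sign. You instead run a single, curvature-sign-free argument along the SGD route: identify the recursion \eqref{theory:eq3} as Riemannian SGD on the Fr\'{e}chet functional, compute the expected step direction via the change of measure \eqref{theory:eq2}, verify the Robbins--Monro conditions for the data-dependent steps, and let the regular-geodesic-ball hypothesis control positive curvature. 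This buys two things the paper lacks: a unified statement with no case split, and --- via your observation that strong convexity of $U$ traps every iterate $M_n$ in $U$, so that $d(M_n, X_{n+1}) \leq 2r$ automatically --- a discharge of the boundedness hypothesis that the paper simply assumes in its non-negative-curvature case. The price is that your final step asks slightly more of the SGD theorem than its literal statement provides: Bonnabel's result is stated for deterministic step sizes, whereas your steps $w_n/S_n$ (with $S_n = \sum_{j=1}^n w_j$) are random and are correlated with the step direction $\text{Exp}^{-1}_{M_{n-1}}(X_n)$, since both depend on $X_n$ through $w(X_n)$; your ``expected direction'' computation quietly factors this out, and making it honest requires a conditioning argument or the decoupling $1/S_n = (1+o(1))/(Cn)$ that your law-of-large-numbers estimate supplies. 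Likewise, the claim that the Fr\'{e}chet mean is the \emph{only} stationary point of the objective in $U$ needs geodesic convexity of the squared distance on $U$, which in positive curvature is exactly where the radius bound is consumed; and your boundedness of $w$ rests on a continuity assumption the paper never makes (it assumes only integrability and positivity), though some such moment or boundedness condition on $w$ is genuinely needed for $\sum_n (w_n/S_n)^2 < \infty$. These are gaps of the same order as the paper's own two-line citations, not flaws in your strategy.
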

\begin{proof}
Using Proposition \ref{theory:prop1}, we know that $\exists \;
\widetilde{p}_X$ such that, $\textsf{wFE}\left(\textsf{X}, w\right) =
\textsf{FE}\left(\widetilde{\textsf{X}}\right)$. Thus, it is enough to
show the consistency of our proposed estimator when weights are
uniform. In order to prove the consistency, we will split the proof
into two cases namely, manifolds with (i) non-positive sectional
curvature and (ii) non-negative sectional curvature. The reason for
doing this split is so that we can use existing theorems in literature
for proving the result.  We will use the theorems proved in
\cite{Sturm2003a} and \cite{Bonnabel2013} for manifolds with
non-positive and non-negative sectional curvatures respectively. Note
that the proof holds only for manifolds with a uniform sign of
sectional curvatures.

\begin{theorem}[$\mathcal{M}$ has non-negative sectional curvature]
Using the above notations, if $\exists A >0$ such that, $d\left(M_n,
X_{n+1}\right)\leq A$ for all $n$. Then, $M_N$ converges a.s. to
$\textsf{wFE}$ as $N \rightarrow \infty$ (see \cite{Bonnabel2013} for
the proof).
\end{theorem}

\begin{theorem}[$\mathcal{M}$ has non-positive sectional curvature]
Using the above notations $M_N$ converges a.s. to $\textsf{wFE}$ as $N
\rightarrow \infty$ (see \cite{Sturm2003a} for the proof).
\end{theorem}
\end{proof}

\subsection{Nonlinear operation between $\textsf{wFM}$-layers for $\mathcal{M}$-valued Data}
\label{sec22}
In the traditional CNN model, we need a nonlinear function between two
convolutional layers similar to ReLU and softmax. As argued in
\cite{Mallat2016}, any nonlinear function used in CNN is basically a
contraction mapping. Formally, let $F$ be a nonlinear mapping from $U$
to $V$. Let assume, $U$ and $V$ are metric spaces equipped with metric
$d_U$ and $d_V$ respectively. Then, $F$ is a contraction mapping {\it
  iff} $\exists c < 1$ such that,
\begin{align}
d_V\left(F(x), F(y)\right) \leq c\; d_U(x, y)
\end{align}
$F$ is a non-expansive mapping \cite{Mallat2016} {\it iff}
\begin{align}
d_V\left(F(x), F(y)\right) \leq d_U(x, y)
\end{align}

One can easily see that the popular choices for nonlinear operations
like ReLU, sigmoid are in fact non-expansive mappings. Now, we will
show that the function $\textsf{wFM}$ as defined in \ref{theory:eq1},
is a contraction mapping for non-trivial choices of weights. Let
$\left\{X_i\right\}_{i=1}^N$ and $\left\{Y_j\right\}_{j=1}^M$ be the
two set of samples on $\mathcal{M}$. Without any loss of generality,
assume $N\leq M$. We consider the set $\mathcal{U}^M =
\underbrace{U \times \cdots \times U}_{M\text{
    times}}$. Clearly $\left\{Y_j\right\}_{j=1}^M \in \mathcal{U}^M$
and we embed $\left\{X_i\right\}_{i=1}^N$ in $\mathcal{U}^M$ as
follows: we construct $\left\{\widetilde{X}_i\right\}_{i=1}^M$ from
$\left\{X_i\right\}_{i=1}^N$ by defining $\widetilde{X}_i =
X_{(i-1)\text{mod}N + 1}$. Let us denote the embedding by
$\iota$. Now, define the distance on $\mathcal{U}^M$ as follows:

\begin{align}
d\left(\left\{\widetilde{X}_i\right\}_{i=1}^M,
\left\{Y_j\right\}_{j=1}^M\right) = \max_{i,j} d\left(X_i, Y_j\right)
\end{align}

The choice of weights for $\textsf{wFM}$ is said to be trivial if one
of the weights is $1$ and hence the rest are $0$.

\begin{proposition}
\label{theory:prop2}
For all nontrivial choices of $\left\{\alpha_i\right\}_{i=1}^N$ and
$\left\{\beta_j\right\}_{j=1}^M$ satisfying the convexity constraint ,
$\exists c < 1$ such that,
\begin{align}
d\left(\textsf{wFM}\left(\left\{X_i\right\}_{i=1}^N, \left\{\alpha_i\right\}_{i=1}^N\right), \textsf{wFM}\left(\left\{Y_j\right\}_{i=1}^M, \left\{\beta_j\right\}_{i=1}^M\right)\right) \leq c\;d\left(\iota\left(\left\{X_i\right\}_{i=1}^N\right), \left\{Y_j\right\}_{j=1}^M\right)
\end{align}
\end{proposition}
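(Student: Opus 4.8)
The plan is to prove the bound in two stages: first show that the weighted Fr\'echet mean map is non-expansive with respect to the given max-metric, and then upgrade this to a strict contraction using the non-triviality of the weights. Throughout I would work inside the regular geodesic ball $U$, where the squared-distance function enjoys strong geodesic convexity. Concretely, the single geometric ingredient I would invoke is the comparison inequality: for any geodesic $\gamma = \Gamma_A^B$ in $U$, any reference point $q \in U$, and any $t \in [0,1]$,
\[
d^2\bigl(\gamma(t),\, q\bigr) \;\le\; (1-t)\,d^2(A,q) + t\,d^2(B,q) - c_\Delta\,t(1-t)\,d^2(A,B),
\]
where $c_\Delta > 0$ equals $1$ when $\Delta \le 0$ (the CAT(0) case) and is a positive curvature-dependent constant when $\Delta > 0$, valid precisely because $U$ has radius below $\pi/(2\Delta^{1/2})$. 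This is exactly the strong convexity that underlies uniqueness of the wFM, so it is available from the standing hypotheses.

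Next I would feed this inequality into the recursive estimator \eqref{theory:eq3}. Writing $M_X$ and $M_Y$ for the two means and setting $R := \max_{i,j} d(X_i, Y_j)$ for the right-hand side of the claim, I would first fix an arbitrary index $j$ and apply the inequality step-by-step along the \textsf{iFME} recursion computing $M_X$, taking $q = Y_j$ as the fixed reference and interpolation parameters $t_n = \alpha_n/\sum_{k\le n}\alpha_k$. Dropping the nonnegative correction terms gives, by induction on the recursion, $d(M_X, Y_j) \le \max_i d(X_i, Y_j) \le R$; retaining them gives the sharper $d^2(M_X, Y_j) \le R^2 - \varepsilon_X$, where $\varepsilon_X \ge 0$ collects the accumulated spread contributions $c_\Delta\,t_n(1-t_n)\,d^2(M^X_{n-1}, X_n)$ weighted by the trailing factors $\prod_{k>n}(1-t_k)$. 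I would then repeat the argument for the recursion computing $M_Y$, now with fixed reference $q = M_X$, to obtain $d^2(M_X, M_Y) \le \max_j d^2(M_X, Y_j) - \varepsilon_Y \le R^2 - \varepsilon_Y$. Taking $c = \sqrt{1 - \varepsilon_Y/R^2}$ then yields the stated inequality.

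The crux is showing $c < 1$, i.e.\ that $\varepsilon_Y$ is strictly positive, and this is where non-triviality of $\{\beta_j\}$ enters. Since at least two of the $\beta_j$ are positive, at least one interpolation parameter $t_n = \beta_n/\sum_{k\le n}\beta_k$ lies strictly in $(0,1)$, so the coefficient $c_\Delta\,t_n(1-t_n)$ is strictly positive; chasing the first index at which a positively weighted $Y$-sample differs from the running iterate shows the matching spread term $d^2(M^Y_{n-1}, Y_n)$ is strictly positive unless all positively weighted $Y_j$ coincide. Hence $\varepsilon_Y > 0$ and $c < 1$ for every non-degenerate configuration, with $c$ depending on the weights and on the geometry through $\Delta$ and $R$. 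I would explicitly note that some such dependence is unavoidable: if all $X_i$ coincide and all $Y_j$ coincide, then $M_X$ and $M_Y$ are single points with $d(M_X, M_Y) = R$, so no contraction factor below $1$ can be uniform over arbitrarily degenerate inputs.

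I expect the main obstacle to be the positive-curvature bookkeeping. The non-expansive half of the argument is curvature-robust, but producing an \emph{honestly} positive $c_\Delta$ uniformly along the recursion requires verifying that every iterate $M^X_n, M^Y_n$ stays inside the strongly convex ball $U$ (so that each geodesic segment is the unique minimizer and the CAT($\Delta$) estimate genuinely applies), and then aggregating the per-step correction terms through the telescoping $(1-t_k)$ factors into a clean closed lower bound for $\varepsilon_Y$. Pinning down that explicit, strictly positive contraction constant in the $\Delta > 0$ regime is the delicate step; the $\Delta \le 0$ case reduces immediately to the CAT(0) inequality with $c_\Delta = 1$.
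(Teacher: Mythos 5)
First, note that the paper itself offers no proof to compare against: its entire argument for this proposition is ``This is easy to prove and hence we skip the proof here.'' Judged on its own merits, your attempt has the right geometric ingredient (strong convexity of $z \mapsto d^2(q,z)$ on a regular geodesic ball) and the right two-stage structure, but it proves the bound for the wrong object. The proposition, and the sentence introducing it, concern $\textsf{wFM}$ \emph{as defined in}~\eqref{theory:eq1}, i.e.\ the minimizer of $\sum_i w_i d^2(X_i,\cdot)$, whereas your induction runs along the \textsf{iFME} recursion~\eqref{theory:eq3} and bounds its output. On a curved manifold these differ for a finite sample: the recursive estimate depends on the ordering of the points and agrees with the Fr\'echet mean only asymptotically (which is exactly why the paper needs its separate consistency proposition). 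So your chain of inequalities controls $d(M^{\textsf{iFME}}_N, \cdot)$, not $d(\textsf{wFM},\cdot)$, and the stated claim does not follow. The gap is repairable with the same comparison inequality applied through the variational characterization instead of the recursion: convexity of $d^2(X_i,\cdot)$ on $U$ together with the stationarity condition $\sum_j \beta_j\,\mathrm{Exp}^{-1}_{M_Y}(Y_j)=0$ gives $d^2(X_i,M_Y)\le \sum_j \beta_j d^2(X_i,Y_j)\le R^2$ for every $i$, and symmetrically $d^2(M_X,M_Y)\le \sum_i \alpha_i d^2(M_Y,X_i)$; retaining the strong-convexity correction terms yields $d^2(M_X,M_Y)\le R^2 - \tfrac{k}{2}\bigl(\sum_i\alpha_i d^2(M_X,X_i)+\sum_j\beta_j d^2(M_Y,Y_j)\bigr)$, which is strictly below $R^2$ unless all the $X_i$ coincide \emph{and} all the $Y_j$ coincide.

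Two further points. Your degenerate example (all $X_i=p$, all $Y_j=q\ne p$) is stronger than you state: it is not merely an obstruction to choosing $c$ uniformly, it is a counterexample to the proposition as printed, since for that fixed input and any nontrivial weights one has $d(M_X,M_Y)=R$, so no $c<1$ exists at all. The proposition is simply false without a non-degeneracy hypothesis, and your proof (once transplanted to the argmin) establishes exactly the corrected statement; you should say so explicitly, as this also shows the paper's ``easy to prove'' dismissal is unwarranted. Relatedly, your final bound keeps only $\varepsilon_Y$ and takes $c=\sqrt{1-\varepsilon_Y/R^2}$; this fails when all $Y_j$ coincide but the $X_i$ do not (then $\varepsilon_Y=0$ although the corrected claim is still true), so you must carry $\varepsilon_X+\varepsilon_Y$ through the last chain and require only that at least one of the two collections is non-degenerate. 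By contrast, the two technical worries you flag are benign: every iterate stays in $U$ because $U$ is strongly convex, so each geodesic segment used by the recursion lies in $U$; and a strictly positive convexity modulus in the $\Delta>0$ regime is guaranteed by the regular-geodesic-ball bound $r<\pi/\bigl(2\Delta^{1/2}\bigr)$, which is already part of the paper's standing assumptions.
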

\begin{proof}
This is easy to prove and hence we skip the proof here.
\end{proof}

\subsection{The invariant (last) layer}
\label{sec23}
We will form a deep network by cascading multiple $\textsf{wFM}$
blocks each of which acts as a convolution-type layer. Each convolutional-type
layer is equivariant to the group action, and hence at the end of
the cascaded convolutional layers, the output is equivariant to the
group action applied to the input of the network. Let $d$ be the
number of output channels each of which outputs a $\textsf{wFM}$,
hence each of the channels is equivariant to the group
action. However, in order to build a network that yields an output
which is invariant to the group action, we now seek the last layer
(i.e., the linear classifier) to be invariant to the group action. The
last layer is thus constructed as follows: Let $\left\{Z_1, \cdots,
Z_d\right\} \subset \mathcal{M}$ be the output of $d$ channels and
$M_u = \textsf{FM}\left(\left\{Z_i\right\}_{i=1}^d\right)$
$=\textsf{wFM}\left(\left\{Z_i\right\}_{i=1}^d,
\left\{\sfrac{1}{d}\right\}_{1}^d\right)$ be the unweighted FM of the
outputs $\left\{Z_i\right\}_{i=1}^d$. Then, we construct a layer with
$d$ outputs whose $i^{th}$ output $o_i = d\left(M_u, Z_i\right)$. Let
$c$ be the number of classes for the classification task, then, a
fully connected (FC) layer with inputs $\left\{o_i\right\}$ and $c$
output nodes is build. A softmax operation is then used at the $c$
output nodes to obtain the outputs $\left\{y_i\right\}_{i=1}^c$. In
the following proposition we claim that this last layer with
$\left\{Z_i\right\}_{i=1}^d$ inputs and $\left\{y_i\right\}_{i=1}^c$
outputs is group invariant.

\begin{proposition}
The last layer with $\left\{Z_i\right\}_{i=1}^d$ inputs and
$\left\{y_i\right\}_{i=1}^c$ outputs is group invariant.
\end{proposition}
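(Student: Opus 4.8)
The plan is to show that precomposing the layer with the diagonal action of any isometry $g \in G$ on the channel outputs $\left(Z_1, \ldots, Z_d\right)$ leaves the final vector $\left\{y_i\right\}_{i=1}^c$ unchanged, i.e. $F\left(\left\{g.Z_i\right\}\right) = F\left(\left\{Z_i\right\}\right)$. I would argue this in three forward steps, tracking the data as it passes from the Fr\'echet mean through the distance features to the softmax.

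First I would reduce the unweighted mean to the already-proven equivariant case. Since the uniform weights $\sfrac{1}{d}$ satisfy the convexity constraint ($w_i > 0$, $\sum_i w_i = 1$), the quantity $M_u = \textsf{FM}\left(\left\{Z_i\right\}\right) = \textsf{wFM}\left(\left\{Z_i\right\}, \left\{\sfrac{1}{d}\right\}\right)$ is a genuine instance of $\textsf{wFM}$, so Theorem \ref{theory:thm1} applies verbatim and gives $\textsf{FM}\left(\left\{g.Z_i\right\}\right) = g.M_u$. Thus the unweighted mean of the transformed outputs is exactly the transformed mean.

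Second I would exploit the defining property of an isometry (Definition \ref{theory:def6}) to collapse this equivariance of the mean into invariance of the intermediate features $o_i = d\left(M_u, Z_i\right)$. Replacing $Z_i$ by $g.Z_i$ and $M_u$ by $g.M_u$, the new $i^{th}$ feature is $d\left(g.M_u, g.Z_i\right) = d\left(M_u, Z_i\right) = o_i$, because $g$ preserves distance. Hence the whole feature vector $\left(o_1, \ldots, o_d\right)$ is fixed pointwise by the action of $g$. Finally, since the fully connected layer and the softmax are deterministic functions of $\left(o_1, \ldots, o_d\right)$ alone and this vector is unchanged, the outputs $\left\{y_i\right\}_{i=1}^c$ coincide, establishing $G$-invariance.

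There is no substantive obstacle here; the statement is an immediate corollary of the equivariance of $\textsf{wFM}$ combined with the isometry hypothesis. The only two points requiring a word of care are (i) justifying that the unweighted $\textsf{FM}$ falls under Theorem \ref{theory:thm1} (done by the uniform-weight observation above), and (ii) noting conceptually that passing to distances $d\left(M_u, Z_i\right)$ is precisely the device that turns the per-channel equivariance into the invariance needed for a classifier output, so that the upstream nonlinearities play no role in the argument.
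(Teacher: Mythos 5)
Your proposal is correct and follows essentially the same route as the paper's own proof: equivariance of the unweighted Fr\'echet mean (Theorem \ref{theory:thm1} with uniform weights) gives $\textsf{FM}\left(\left\{g.Z_i\right\}\right) = g.M_u$, the isometry property collapses the distance features via $d\left(g.M_u, g.Z_i\right) = d\left(M_u, Z_i\right)$, and invariance of the FC+softmax output follows since it depends only on those features. Your writeup is in fact slightly more careful than the paper's, which asserts $g.M_u = \textsf{FM}\left(\left\{g.Z_i\right\}\right)$ without explicitly noting that the uniform weights satisfy the convexity constraint required by the theorem.
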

\begin{proof}
Using the above construction, let $W \in \mathbf{R}^{c \times d}$ and
$\mathbf{b} \in \mathbf{R}^c$ be the weight matrix and bias
respectively of the FC layer. Then,
\begin{align*}
\mathbf{y} &= F\left(W^T\mathbf{o} + \mathbf{b}\right) \\
&= F\left(W^T d\left(M_u, Z\right) + \mathbf{b}\right), \numberthis
\end{align*}
where, $F$ is the softmax function. In the above equation, we treat
$d\left(M_u, Z\right)$ as the vector $\left[d\left(M_u, Z_1\right),
  \cdots, d\left(M_u, Z_d\right)\right]^t$. Observe that, $g.M_u =
\textsf{FM}\left(\left\{g.Z_i\right\}_{i=1}^d\right)$. As each of the $d$
channels is group equivariant, $Z_i$ becomes $g.Z_i$. Because of the
property of the distance under group action, $d\left(g.M_u,
g.Z_i\right) = d\left(M_u, Z_i\right)$. Hence, one can see that if we
change the inputs $\left\{Z_i\right\}$ to $\left\{g.Z_i\right\}$, the
output $\mathbf{y}$ will remain invariant.
\end{proof}
In Fig. \ref{fig0} we present a schematic of ManifoldNet depicting the
different layers of processing the manifold-valued data as described
above in Sections \ref{sec21}-\ref{sec23}.
 \begin{figure}[!ht]
\centering
\includegraphics[scale=0.6]{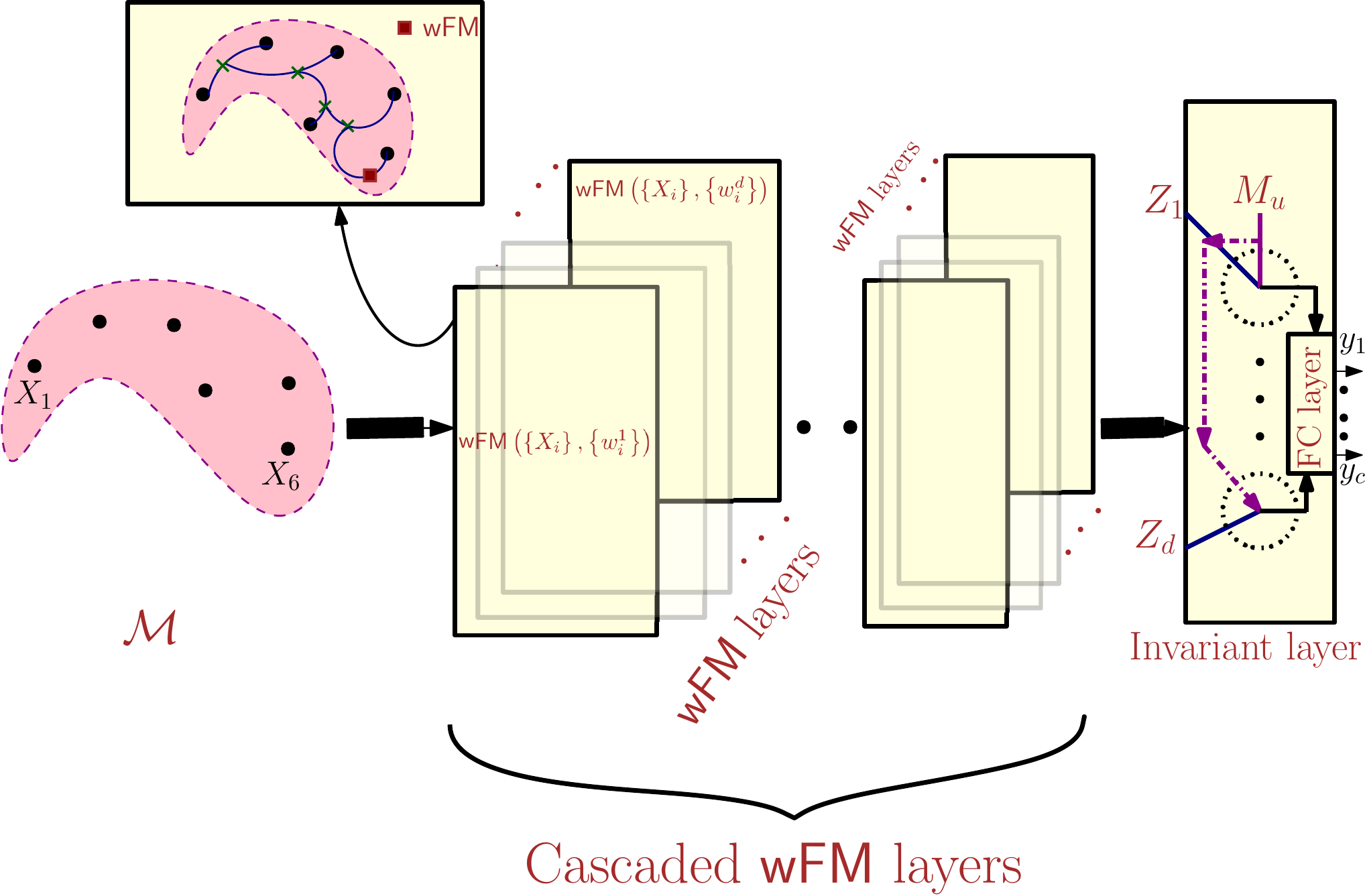}
\caption{Schematic diagram of ManifoldNet}
\label{fig0}
\end{figure}

\section{Experiments}\label{results}

In this section we present performance of the ManifoldNet framework on
several computer vision problems.  The breadth of application coverage
here includes classification and reconstruction problems. We begin
with a video classification problem and then present a reconstruction
problem using an auto-encoder-decoder set up.

\subsection{Video Classification}
We start by using the method in \cite{Yu2017SecondOrderCNN} which we
summarize here.  Given a video with dimensions $F \times 3 \times H
\times W$ of $F$ frames, $3$ color channels and a frame size of $H
\times W$, we can apply a convolution layer to obtain an output of
size $F \times C \times H' \times W'$ consisting of $C$ channels of
size $H' \times W'$. Interpreting each channel as a feature map, we
shift the features to have a zero mean and compute the covariance
matrix of the convolution output to obtain a sequence of $F$ symmetric
positive (semi) definite (SPD) matrices of size $C \times C$. From
here we can apply a series of temporal ManifoldNet $\textsf{wFM}$s to
transform the $F \times C \times C$ input to a temporally shorter $F'
\times K \times C \times C$ output, where $K$ are the temporal
$\textsf{wFM}$ channels. Within the temporal ManifoldNet
$\textsf{wFM}$s we use a simple weight normalization to ensure that
the weights are within $[0,1]$, and for the weights $w_i$ of any
output channel we add a weight penalty of the form $(\sum w_i-1)^2$ to
the loss function to ensure that we obtain a proper $\textsf{wFM}$. We
then reshape this to $F'K \times C \times C$ and pass it through an
invariant final layer (section \ref{sec23}) to obtain a vector of size
$F'K$. Finally, a single FC+SoftMax layer is applied to produce a
classification output. We call this the SPD temporal convolutional
architecture SPD-TCN. Figure \ref{ManifoldTCN} illustrates the network
architecture described above. In general, the SPD-TCN tends to perform
very well on video classification tasks while \emph{using very few
  parameters}, and runs efficiently due to the $\textsf{wFM}$
structure.

\begin{figure}[!ht]
\label{ManifoldTCN}
\centering
\hbox{\hspace{-4em} \includegraphics[clip,trim=0cm 3cm 0cm 8cm,scale=0.6]{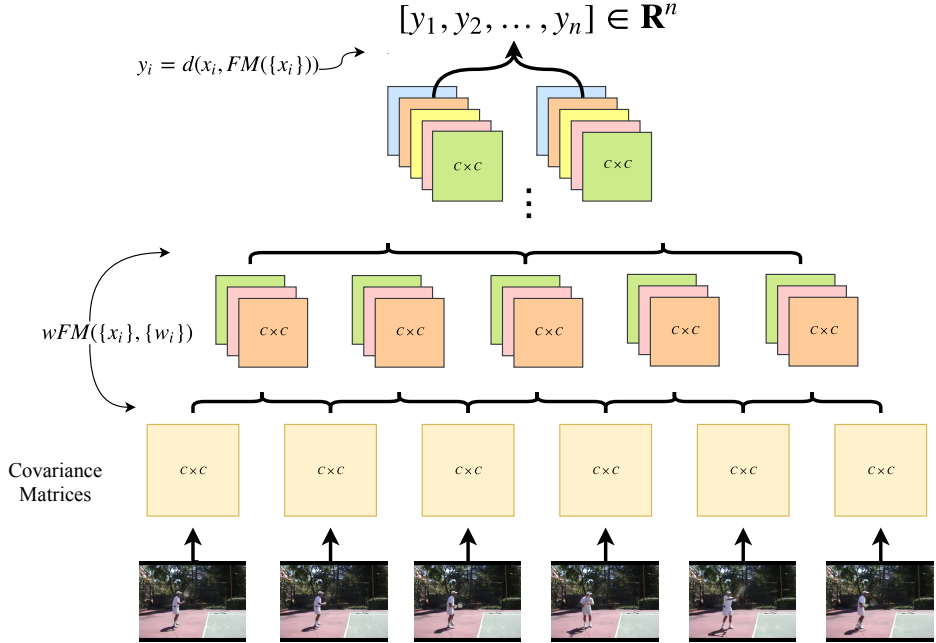}}
\caption{ManifoldTCN Network Architecture}
\label{fig1}
\end{figure}

We tested the ManifoldTCN on the Moving MNIST dataset
\cite{Srivastava2015LSTM-Vid}. In \cite{Chakraborty2018ManifoldRNN}
authors developed a manifold valued recurrent network architecture,
dubbed SPD-SRU, which produced state of the art classification
results on a version of the Moving MNIST dataset in comparison to LSTM
\cite{Hochreiter1997LSTM}, SRU \cite{Oliva2017SRU}, TT-LSTM and TT-GRU
\cite{Yang2017TensorTrain} networks.  For the LSTM and SRU networks,
convolution layers are also used before the recurrent unit. We will
compare directly with these results. For details of the various
architectures used please see section 5 of
\cite{Chakraborty2018ManifoldRNN}.  The Moving MNIST data generated in
\cite{Srivastava2015LSTM-Vid} consists of $1000$ samples, each of $20$
frames. Each sample shows two randomly chosen MNIST digits moving
within a $64 \times 64$ frame, with the direction and speed of
movement fixed across all samples in a class. The speed is kept the
same across different classes, but the digit orientation will differ
across two different classes. We summarize the $10$-fold cross
validation results for several orientation differences between classes
in Table \ref{results:tab1}.  For this experiment the SPD-TCN will
consist of a single $\textsf{wFM}$ layer with kernel size $5$ and stride
$3$ returning $8$ channels, making for an $8 \times 8$ covariance
matrix. We then apply three temporal SPD $\textsf{wFM}$ layers of kernel
size $3$ and stride $2$, with the following channels $1 \to 4 \to 8
\to 16$, i.e. after these three temporal SPD $\textsf{wFM}$s we have $16$
temporal channels. This $16 \times 8 \times 8$ is used as an input to
the invariant final layer to get a $16$ dimensional output vector,
which is transformed by a fully connected layer and SoftMax to obtain
the output.

\begin{table}[!ht]
	\label{video-res}
   \centering
   \scalebox{0.9}{
\begin{tabular}{cccccc} 
\topline\rowcolor{tableheadcolor}
 & & {\bf time (s)} & \multicolumn{3}{c}{{\bf orientation ($^\circ$)}} \\
\arrayrulecolor{tableheadcolor}\hhline{---~~~}\arrayrulecolor{rulecolor}\hhline{~~~---}\rowcolor{tableheadcolor}
\multirow{-2}{*}{\bf Mode} & \multirow{-2}{*}{\bf \# params.} & {\bf / epoch} & $30$-$60$ & $10$-$15$ & $10$-$15$-$20$ \\
\midtopline
SPD-TCN & $\highest{738}$ & $\sim 2.7$ & $\highest{1.00 \pm 0.00}$ & $\highest{0.99 \pm 0.01}$  & $\highest{0.97 \pm 0.02}$\\
SPD-SRU & $1559$ & $\sim 6.2$ & $\highest{1.00\pm 0.00}$ & ${0.96 \pm 0.02}$ & ${0.94 \pm 0.02}$ \\
TT-GRU & $2240$ & $\sim\highest{2.0}$ & $\highest{1.00 \pm 0.00}$ & $0.52 \pm 0.04$ & $0.47 \pm 0.03$ \\
TT-LSTM & $2304$ & $\sim\highest{2.0}$ & $\highest{1.00 \pm 0.00}$ & $0.51 \pm 0.04$ & $0.37 \pm 0.02$ \\
SRU & $159862$ & $\sim 3.5$ & $\highest{1.00 \pm 0.00}$ & $0.75 \pm 0.19$ & $0.73 \pm 0.14$ \\
LSTM & $252342$ & $\sim 4.5$ & $0.97 \pm 0.01$ & $0.71 \pm 0.07$ & $0.57 \pm 0.13$ \\
\bottomline
\end{tabular}
}
\caption{\footnotesize Comparison results on Moving MNIST}
\label{results:tab1}
\end{table}

\subsection{Dimensionality Reduction}

Here we present experiments demonstrating the applicability of the
theory layed out in Section \ref{theory} to the case of linear
dimensionality reduction, specifically principal component analysis
(PCA), which is the workhorse of many machine learning algorithms. In
\cite{Chakraborty2017Grassmann}, authors presented an online subspace
averaging algorithm for construction of principal components via
intrinsic averaging on the Grassmannian. In this section, we achieve
the intrinsic Grassmann averaging process in the framework of
ManifoldNets to compute the principal subspaces and achieve
the dimensionality reduction. In the context of deep neural networks,
dimensionality reduction is commonly achieved via an autoencoder
architecture.
More recently, deep neural networks have shown promising results when
the data manifold is intrinsically non-linear, as in the case of
natural images. In the deep learning community this has become a field
in its own right, known as representation learning or feature learning
\cite{Bengio2012Representations}. This field has seen several
significant advances in the past few years, including the introduction
of denoising autoencoders \cite{Vincent2010Denoising}, variational
autoencoders \cite{Kingma2013VAE}, autoregressive models (PixelCNN
\cite{Oord2016pixelcnn}, PixelRNN \cite{Oord2016pixelrnn}), and
flow-based generative models \cite{Kingma2018GLOW}. Many of these
architectures are modifications of the traditional autoencoder
network, which involves learning an identity map through a small
latent space. In our application, we modify the traditional
autoencoder model by adding a ManifoldNet layer to perform a learned
linear dimensionality reduction in the latent space, although in
principal, our techniques can be applied to most autoencoder based
models such as the variational autoencoders.

To compute a linear subspace in the ManifoldNet framework we use an
intrinsic averaging scheme on the Grassmannian. Points on the
Grassmannian $\textsf{Gr}(k,n)$ correspond to $k$-dimensional
subspaces of the vector space $\mathbf{R}^n$. The Grassmannian is a
smooth homogeneous manifold and points $\mathcal{X} \in
\textsf{Gr}(k,n)$ on the Grassmannian can be specified by an
orthonormal basis $X$, i.e. an $n \times k$ orthonormal
matrix. Hauberg et al. showed that the one dimensional principal
subspace can be computed as an average of all one dimensional
subspaces spanned by normally distributed data
\cite{Hauberg2016PCA}. Motivated by this result, Chakraborty et
al. proposed an efficient intrinsic averaging scheme on
$\textsf{Gr}(k,n)$ that converges to the $k$-dimensional principal
subspace of a normally distributed dataset in $\mathbf{R}^n$
\cite{Chakraborty2017Grassmann}. In the ManifoldNet framework we can
modify this technique to learn a $\textsf{wFM}$ of points on the
Grassmannian that corresponds to a subspace of the latent space which
minimizes the reconstruction error by using a Grassmannian averaging
layer that learns the weights in the $\textsf{wFM}$. This essentially
will give us a lower dimensional representation of the samples after
projecting them on to the learned subspace. Note that combining the
convergence proof in Chakraborty et
al. \cite{Chakraborty2017Grassmann} and Proposition
\ref{theory:prop1}, we claim that the $\textsf{wFM}$ learned using the
ManifoldNet asymptotically converges to the principal subspace. In the
rest of the section, we give a detailed description of our
experimental setup to show the applicability of ManifoldNet to
dimensionality reduction.
\subsubsection{Implementation Details}

A traditional convolutional autoencoder performs non-linear
dimensionality reduction by learning an identity function through a
small latent space. A common technique used when the desired latent
space is smaller than the output of the encoder is to apply a fully
connected layer to match dimensions. We replace this fully connected
layer by a weighted subspace averaging and projection block, called
the Grassmann averaging layer. Specifically, we compute the 
$\textsf{wFM}$ of the output of the encoder to get a subspace in the
encoder output space.  We then project the encoder output onto this
space to obtain a reduced dimensionality latent space.  We explicitly
perform linear dimensionality reduction in the latent space, hence, we
should expect the encoder to linearize the data manifold to some
degree, followed by a linear dimensionality reduction from the
Grassmann averaging layer. When this layer is used in a larger network
we add a weight penalty to the loss function of the form $(\sum w_i -
1)^2$ to ensure that we compute a proper $\textsf{wFM}$.  In
general this offers a significant parameter reduction while also
increasing the reconstruction error performance of the autoencoder and
giving realistic reconstructions.  We call an autoencoder with the
Grassmann averaging block an autoencoder+iFME network, as shown in
Fig. \ref{fig0.5}. In the experiments we compare this to other
dimensionality reduction techniques, including regular autoencoders
that use fully connected layers to match encoder and latent space
dimensions. The PyTorch code used to run some of these experiments,
including a PyTorch module for easily using a Grassmann averaging
layer are accessible via GitHub
\footnote{https://github.com/jjbouza/manifold-net}.

 \begin{figure}[!ht]
\centering
\includegraphics[scale=0.58]{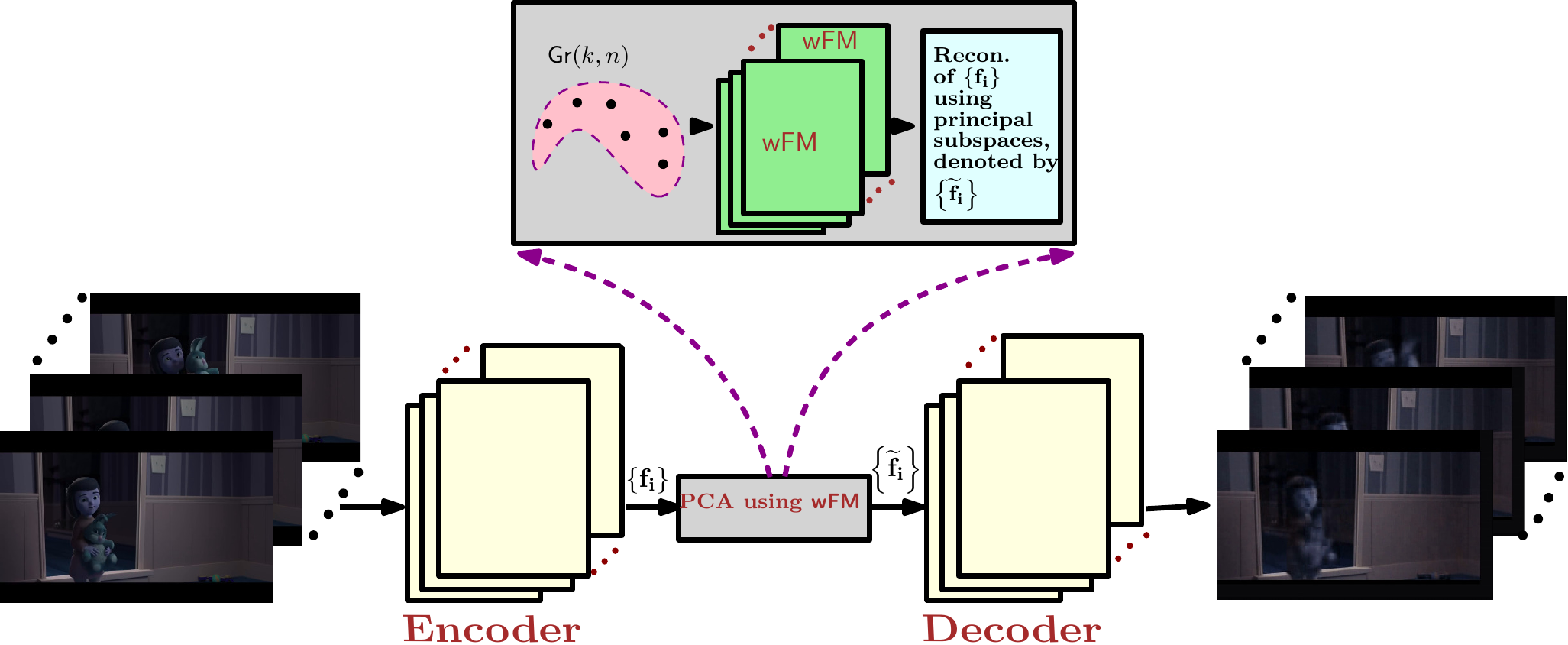}
\caption{Schematic description of autoencoder+iFME network}
\label{fig0.5}
\end{figure}
\subsubsection{Video Reconstruction Experiment}

Assume we start with an $N \times 3 \times W \times H$ video, where
$N$,$W$, and $H$ are the number of frames, the frame width and the
frame height respectively. Then, we apply an encoder in parallel to
the frames to obtain a $N \times K \times M$ feature video $F$, where
$K$ and $M$ are the channels and feature dimension,
respectively. Finally, we compute the $\textsf{wFM}$ of $F$ as $N$
vectors in $\mathbf{R}^{KM}$ and project $F$ onto this subspace to get
a reduced dimensionality latent space.  From here we proceed as in a
regular autoencoder, backpropagating the $L^2$ reconstruction error
throughout the network.  Note that here we can interpret the Grassmann
averaging layer weights as indicating the relative importance of
different temporal segments of the video for the purpose of learning
reconstructable features.  To compare with a regular autoencoder we
simply remove the Grassmann averaging layer and replace it with a
fully connected layer that maps to the same latent space dimension.
This is justified by the fact that both operations are linear on the
data, i.e. the Grassman averaging block is not adding additional
non-linear capacity to the network.

We begin by testing on a $1000$ frame color sample of video from the
$1964$ film “Santa Clause Conquers the Martians” of frame size $320
\times 240$. Here we use an $8$ layer encoding-decoding architecture
with $\textbf{Conv} \rightarrow \textbf{ELU} \rightarrow
\textbf{Batchnorm}$ layers, with the final layer applying a sigmoid
activation to normalize pixel values.

The encoder returns a feature video consisting of $128$ channels of
size $120$ for a dimension of $1000 \times 15360$.  We compare a fully
connected layer to a Grassmann averaging layer, both mapping to a
desired latent space of dimension $1000 \times 20$.  The per pixel
average reconstruction error for the Grassmann block network is
$0.0110$, compared to $0.0122$ for the fully connected network,
representing an improvement of $10.9 \%$. In general, the Grassmann
averaging layer tends to do as well or better than the fully connected
layer. Although in theory the fully connected layer can learn the same
mapping as the Grassmann averaging layer, it has a much larger
parameter space to search for this solution, implying that it is more
likely to get trapped in local minima in the low loss regions of the
loss surface.  We also observe a parameter reduction of $46 \%$, which
can be attributed to the number of parameters in the large fully
connected layer.  In general the Grassmann averaging layer network is
slower per iteration than the fully connected network, but also tends
to exhibit faster convergence so that the time to reach the same
reconstruction error is less for the Grassmann averaging layer. This
is depicted in Fig. \ref{fig1}, where, computation time is plotted
against error tolerance for the autoencoder and the iFME+autoencoder.
Overall, we see an improvement in all major performance categories.

 \begin{figure}[!ht]
\centering
\includegraphics[scale=0.38]{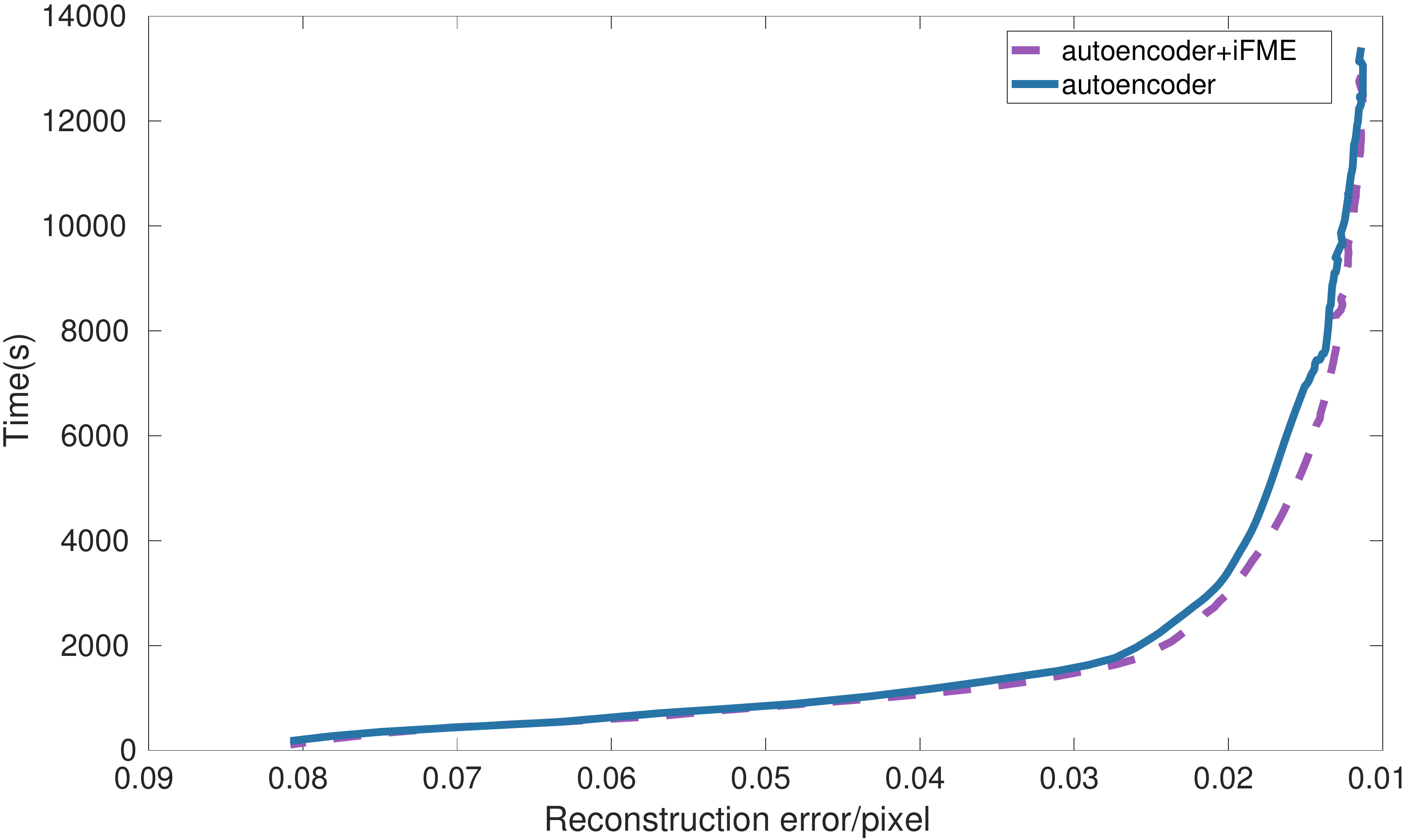}
\caption{Computation time vs. error tolerance plot comparison between autoencoder and iFME+autoencoder}
\label{fig1}
\end{figure}

It is possible to obtain a low reconstruction error on autoencoding
tasks and still observe low visual quality reconstructions.  To ensure
this is not the case we run the same experiment on 300 frames of the
$1280 \times 720$ short
film \footnote{https://www.youtube.com/watch?v=t1hMBnIMt5I}, with a
latent space frame dimension of 300x50. In Fig. \ref{fig2} we compare
the visual quality of our autoencoder to that of PCA with $50$
principcal components, i.e., we reduce the dimension from $1280\times
720 \times 3$ to $50$.  The entire sample reconstruction is
shown in \footnote{https://streamable.com/3yqrx} (in the same order as
in Fig. \ref{fig2}).

\begin{figure}[!ht]
\centering
\includegraphics[scale=0.40]{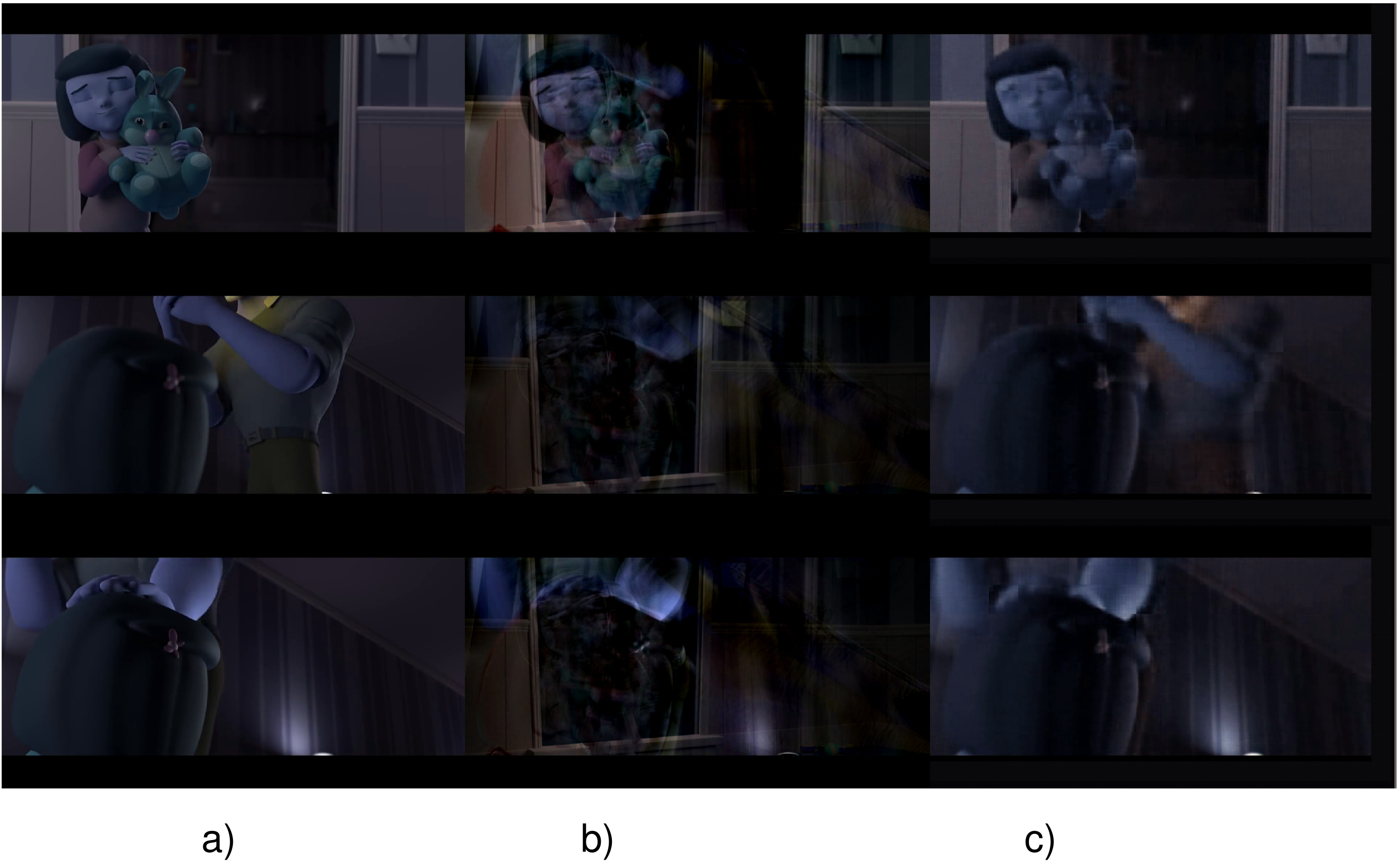}
\caption{Reconstruction of select movie frames (a) original frame (b) using PCA (c) using iFME+autoencoder}
\label{fig2}
\end{figure}

\subsubsection{MNIST Reconstruction Experiment}

Finally, we applied the ManifolNet framework to MNIST
data \footnote{http://yann.lecun.com/exdb/mnist/}. We trained a 4-layer
autoencoder on a random sample of $1000$ images and test the
reconstruction error on a $100$ sample test set. Here the Grassmann
averaging layer reduces from an encoder dimension of $25$ to a latent
space of dimension $2$. In Fig. \ref{fig:mnist_reconstruction} we
compared the reconstructions of our autoencoder+iFME to a regular
autoencoder with a fully connected layer after the encoder and to a
PCA reconstruction in image space.

\begin{figure}[!ht]
	\centering
	\includegraphics[scale=0.5]{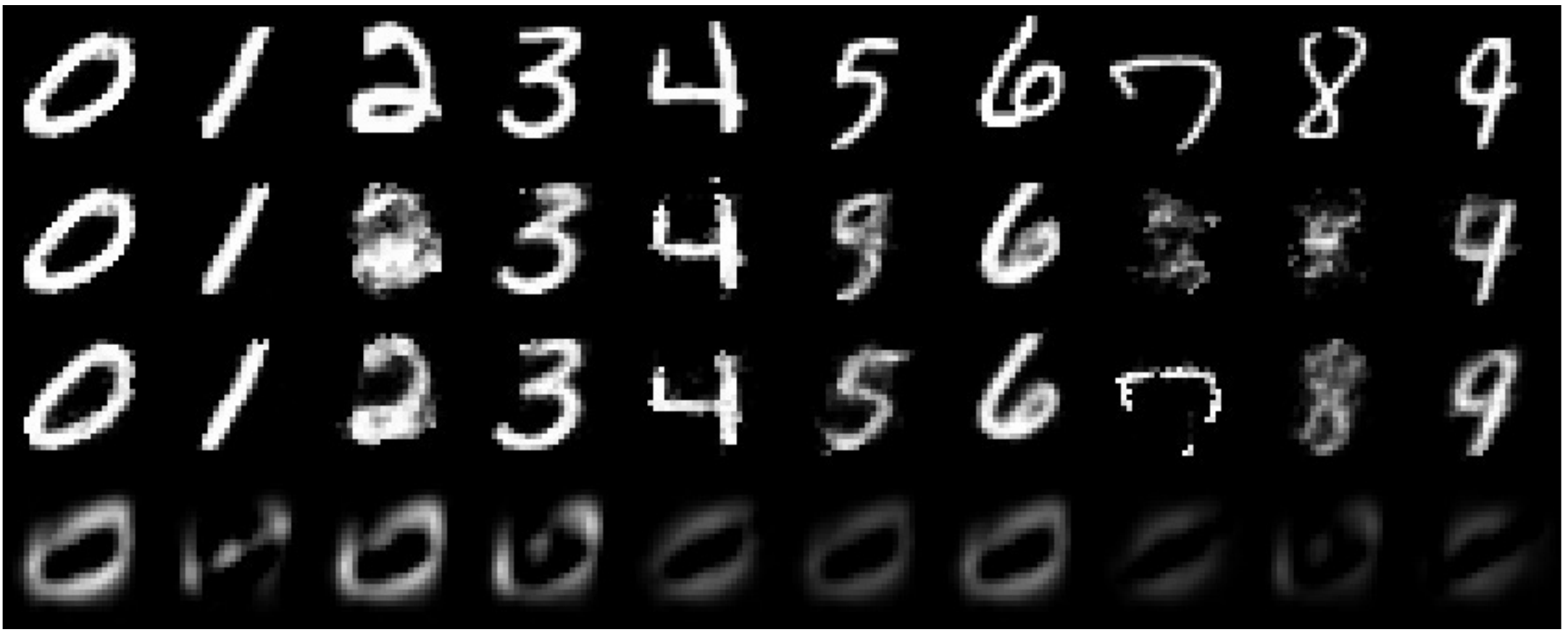}
	\caption{Reconstructions of (a), a sample of the validation set: (b) Regular autoencoder (c) Autoencoder+iFME (d) PCA }
	\label{fig:mnist_reconstruction}
\end{figure}

To evaluate the validation set reconstruction performance rigorously
we trained a simple two layer fully connected classifier on the entire
MNIST data set, achieving a training accuracy of $99.97\%$ after $120$
epochs of training. We then applied this classifier to all of the test
set reconstructions given by the above methods, and evaluated the
accuracy of the reconstructions by measuring the classification
accuracy of the classifier. These results are summarized in
Fig. \ref{fig:mnist_reconstruction}.

We observe that a regular autoencoder yielded marginally better
classification accuracy on the training set, but autoencoder+iFME
gave a much larger improvement on the test set. Since the fully
connected layer at the end of the regular autoencoder can learn a
mapping identical to the projection in our Grassmann averaging layer,
we conclude that our method acts as a regularizer here, preventing
overfitting to the training set.

\begin{figure}[!ht]
	\centering

	\begin{tabular}{l|l|c|r}
		 & Autoencoder & Autoencoder+iFME & PCA \\
		\hline
		 Training Set & $\mathbf{0.862}$ & $0.8540$ & $0.2300$ \\
		 Validation Set & $0.4000$ & $\mathbf{0.4600}$ & $0.2600$ \\
	\end{tabular}
	\caption{Classification Accuracy of MNIST data Classifier on
          Reconstructed Data}

	\label{fig:mnist_reconstruction}
\end{figure}

\section{Conclusions}\label{conc}
In this paper, we presented a novel deep network called ManifoldNet
suited for processing manifold-valued data sets. Manifold-valued data
are commonly encountered in many computer vision and medical imaging
applications. Examples of such data include but are not limited to
directional data which reside on a sphere, covariance descriptors
which reside on the manifold of symmetric positive matrices and
others.  Note that inputs to ManifoldNet are manifold-valued and not
real or complex-valued functions defined on non-Euclidean domains.
Our key contributions are: (i) A novel deep network that maybe
perceived as being a generalization of the CNN to the case when the
input data are manifold-valued using purely intrinsic operations that
are natural to the manifold on which the data reside.  (ii) Analogous
to convolutions in vector spaces -- which can be achieved via
computation of weighted mean -- we present weighted FM operations as a
replacement for the convolutions in CNNs and prove a theorem on the
equivariance of the weighted FM to natural group operations admitted
by the manifold. This equivariance allows us to share the learned
weights within a layer of the ManifoldNet. (iii) An efficient
recursive weighted FM estimator that is provably convergent is
presented. (iv) Experimental results demonstrating the efficacy of
ManifoldNets for, (a) video classification and (b) principal component
computation from videos and reconstruction are also presented.

{\small
\bibliographystyle{plain}
\bibliography{references}
}

\end{document}